\newtheorem{theorem}{Theorem}
\newtheorem{lemma}{Lemma}
\begin{document}
%
\title{SADIH: Semantic-Aware DIscrete Hashing}
\author{Zheng Zhang$^1$, Guo-sen Xie$^2$, Yang Li$^1$, Sheng Li$^3$, Zi Huang$^1$\\
$^1$ The University of Queensland, Australia\\
$^2$ Inception Institute of Artificial Intelligence, UAE\\
$^3$ University of Georgia, USA\\
\{darrenzz219, gsxiehm\}@gmail.com; y.li9@uq.net.au; sheng.li@uga.edu; huang@itee.uq.edu.au
}
\maketitle
\begin{abstract}
Due to its low storage cost and fast query speed, hashing has been recognized to accomplish similarity search in large-scale multimedia retrieval applications. Particularly, supervised hashing has recently received considerable research attention by leveraging the label information to preserve the pairwise similarities of data points in the Hamming space. However, there still remain two crucial bottlenecks: 1) the learning process of the \textit{full pairwise similarity preservation} is computationally unaffordable and unscalable to deal with big data; 2) the available category information of data are not well-explored to learn discriminative hash functions. To overcome these challenges, we propose a unified Semantic-Aware DIscrete Hashing (SADIH) framework, which aims to directly embed the transformed semantic information into the asymmetric similarity approximation and discriminative hashing function learning. Specifically, a semantic-aware latent embedding is introduced to asymmetrically preserve the full pairwise similarities while skillfully handle the cumbersome $n\times n$ pairwise similarity matrix. Meanwhile, a semantic-aware autoencoder is developed to jointly preserve the data structures in the discriminative latent semantic space and perform data reconstruction. Moreover, an efficient alternating optimization algorithm is proposed to solve the resulting discrete optimization problem. Extensive experimental results on multiple large-scale datasets demonstrate that our SADIH can clearly outperform the state-of-the-art baselines with the additional benefit of lower computational costs.
\end{abstract}
\section{Introduction}
In the big data era, recent years have witnessed the ever-growing volume of multimedia data with high dimensionality. This is made possible by the emergence of large-scale similarity measurement technique with high computational efficiency \cite{zhang2017marginal,zhang2018binary}. Different from the traditional indexing technique \cite{Lew2006}, hashing yields a scalable similarity search mechanism with acceptable accuracies in the fast Hamming space \cite{JDWang2018}. Technically, hashing generally compresses the high-dimensional data instances into compact binary codes (typically $\leq$ 128-dim), in which the similarity and structural information are preserved from the original data. In this paper, we will mainly focus on the learning-based hashing methods that are formulated by the data-dependent hash encoding strategy, which has shown better retrieval performance than data-independent (or learning-free) hashing schemes, such as locality-sensitive hashing (LSH) \cite{LSH} and its variants \cite{BrianKSH,jiang2015revisiting}.

A common problem of learning-based hashing methods is to construct similarity-preserving hash functions, which generate similar binary codes for nearby data items. Many such hashing learning methods have been proposed to enable efficient similarity search and can be broadly grouped into two categories: unsupervised and supervised hashing.

Unsupervised methods typically encode samples as binary codes by exploring data distribution without label or relevances \cite{zhang2018binary,zhang2018highly}. They learn hash codes/functions based on the semantic gap principle \cite{smeulders2000content}, \textit{i.e.,} the difference in structures formed within the high- and low-level descriptors. Representative unsupervised hashing methods include manifold learning based hashing and quantization based hashing. Manifold learning based hashing tries to discover the neighborhood relationship of data points in the learned binary codes, such as spectral hashing (SH) \cite{SH}, multiple feature hashing (MFH) \cite{MFH}, scalable graph hashing (SGH) \cite{SGH} and ordinal constraint hashing (OCH) \cite{liu2018ordinal}. Quantization based hashing aims to achieve the minimal quantization error, such as iterative quantization (ITQ) \cite{ITQ} and quantization-based hashing \cite{song2018quantization}. Due to the absence of semantic label information, unsupervised hashing is usually inferior to supervised hashing, which can produce state-of-the-art retrieval results.

Supervised hashing generates discriminative and compact hash codes/functions by leveraging the supervised semantic information from data such as pairwise similarity or relevant feedback. Many supervised hashing methods have been proposed to enable efficient similarity search. Representative methods in this group include semi-supervised hashing (SSH) \cite{SSH}, latent factor hashing (LFH) \cite{LFH}, fast supervised hashing (FastH) \cite{FastH}, column sampling based discrete supervised hashing (COSDISH) \cite{COSDISH}, etc. It has been studied to construct encoding functions in a designed kernel space, such as binary reconstructive embedding (BRE) \cite{BRE}, kernel-based supervised hashing (KSH) \cite{KSH}, the kernel variant of ITQ \cite{ITQ}, supervised discrete hashing (SDH) \cite{SDH}, SDH with relaxation (SDHR) \cite{SDHR} and fast SDH (FSDH) \cite{FSDH}. The kernel based hashing algorithms have been demonstrated to achieve promising performance.

To further improve the retrieval performance, many deep hashing models \cite{erin2015deep,shen2018unsupervised,li2016feature,lai2015simultaneous,lin2015deep} have been introduced over the past few years, where the nonlinear feature embeddings learned by deep neural networks were typically shown to achieve higher performance than hand-crafted descriptors. As we know, semantic hashing \cite{salakhutdinov2009semantic} is the pioneering work of using deep machine for hashing. However, these deep hashing models are complicated and need pre-training, which is inefficient in real applications. Moreover, there might be concern about the encoding time of the training and test data.

Although achieving progress, current supervised similarity-preserving hashing methods are still facing severe challenges. First, to \textit{avoid} using the full $n$ $\times$ $n$ pairwise similarity matrix, these methods employ sampling strategies in the training phase to reduce the large computation and memory overhead. In such a case, they fail to capture the full structures residing on the entire data, which inevitably results in information loss and unsatisfactory performance. Their objectives would be suboptimal for realistic search tasks, and such methods become inappropriate for large-scale retrieval tasks. Second, only preserving the pairwise similarities transformed from labels clearly excludes the category information of data from the training step. In this way, these methods can not transfer the discriminative information from labels into the learned binary codes, resulting in inferior performance. Third, since the discrete optimization introduced by binary constraint leads to an NP-hard mixed integer programming problem, most of them usually solve it by relaxing the binary variables into continuous ones, followed by thresholding or quantization. However, such relaxation strategy can amplify the quantization errors, which may greatly influence the quality of the learned binary codes and degrade the performance.

To address the aforementioned problems, we propose a novel discriminative binary code learning framework, dubbed Semantic-Aware DIscrete Hashing (SADIH), for fast scalable supervised hashing. Specifically, we introduce an asymmetric similarity-preserving strategy that can preserve the discrete constraint and reduce the accumulated quantization error between binary code matrix and the well-designed latent semantic-aware embedding. During the learning step, such trick can skillfully handle the huge $n\times n$ pairwise similarity matrix, and preserve the discriminative category information into the learned binary codes. Meanwhile, we also develop a novel semantic-aware encoder-decoder paradigm to guarantee the high-quality latent embedding. In particular, an encoder projects the visual features of an image into a latent semantic space, and in turn consider the latent semantic-aware embedding as an input to a decoder which reconstructs the original visual representation. As such, our learning framework not only can effectively preserve the discriminative semantic information into the learned binary codes and hashing functions, but efficiently approximates the full pairwise similarities without information loss. Furthermore, an alternating algorithm is developed to solve the resulting problem, where each subproblem can be optimized efficiently, yielding satisfactory solutions.To sum up, the main contributions of this work are:

(1) A novel semantic-aware discrete hashing framework is proposed to simultaneously consider the full pairwise similarities ($n\times n$) and the category information into the joint learning objective. SADIH aims to generate discriminative binary codes which can successfully capture the entire pairwise similarities as well as the intrinsic correlations between visual features and semantics from different categories.

(2) We introduce a latent semantic embedding space which can reconcile the structural difference between the visual and semantic spaces, meanwhile preserve the discriminative structures in the learned binary codes.

(3) An asymmetric similarity approximation loss is developed to reduce the accumulated quantization error between the learned binary codes and the latent semantic-aware embeddings. Meanwhile, a supervised semantic-aware autoencoder is constructed to jointly perform the data structural preservation and data reconstruction. The well-designed alternating optimization algorithm with guaranteed convergence is applied to produce the high-quality hash codes.

\section{Semantic-Aware Discrete Hashing}
\subsection{Basic Formulation}
This work mainly focuses on supervised hashing to enable efficient semantic similarity search by Hamming ranking of compact hash codes. Suppose we have $n$ $d$-dimensional data points, denoted as $\bm X = [\bm x_1, \cdots, \bm x_n] \in \Re^{d\times n}$, and their associated semantic labels are $\bm Y = [\bm y_1, \cdots, \bm y_n]\in \{0,1\}^{c\times n}$, where $c$ is the number of classes. The $i$-th column of matrix $\bm Y$, \textit{i.e.} $\bm y_i = [0,\cdots 1,\cdots,0]^T \in \Re^c$, is the label vector of the $i$-th sample, and $y_{ji} = 1$ indicates $\bm x_i$ belongs to the $j$-th class. Notably, supervised hashing also contains a pairwise similarity matrix $\bm S \in \{-1,1\}^{n\times n}$ obtained from semantic correlations such as labels used in this paper. Specifically, $s_{ij} = 1$ means that data items $i$ and $j$ are semantically similar and share at least one label, while $s_{ij} = -1$ indicates items $i$ and $j$ are semantically dissimilar.

The goal of supervised hashing aims to learn $l$ hashing functions to project the data $X$ into a discriminative Hamming space, and generate a binary code matrix $\bm B = [\bm b_1, \cdots, \bm b_n] \in \{-1,1\}^{l\times n}$. Moreover, the learned binary codes should preserve the semantic similarities indicated in $\bm S$. The commonly-used objective function \cite{KSH} quantizes the approximation error between the Hamming affinity and semantic similarity matrix using
\begin{align}\label{eq_1}
\min_{\bm B} \| r\bm S - \bm B^T \bm B\|_F^2~s.t.~\bm B \in \{-1,1\}^{l\times n},
\end{align}
where $\|\cdot\|_F$ is the Frobenius norm. In this model, the inner product of any two binary codes reflects the opposite of the Hamming distance, and can be used to approximate the corresponding similarity labels. Due to its effectiveness, this model has become a standard formulation for supervised hashing learning \cite{LFH,FastH,COSDISH}. However, there are still several deficiencies. \textbf{First}, such a symmetric binary affinity form is limited in matching the real-valued ground truth. Importantly, the optimization on symmetric discrete constraint is time-consuming, which makes it hard to adapt for large-scale datasets \cite{neyshabur2013power}. \textbf{Second}, owing to its computation and memory prohibition, the full similarity matrix $\bm S$ is usually avoided using in the training step. An alternative strategy is to sample a small subset for training, which inevitably causes information loss and suboptimal results. \textbf{Third}, directly transforming labels into pairwise similarities loses the category information of training data, which can not preserve the discriminative characteristics into the learned binary codes. \textbf{Finally}, most methods solve the discrete optimization problem by relaxing the discrete constraint by omitting the sign function. However, the approximate solution is obviously suboptimal and often generates low-quality hashing codes.

Therefore, we present an efficient \textbf{Semantic-Aware DIscrete Hashing (SADIH)} framework to alleviate the above limitations. In the method, the asymmetric hamming affinity approximation, latent semantic embedding and encoder-decoder paradigm are simultaneously considered to guarantee discriminative binary codes and hashing functions.

\subsection{Objective Function}
\subsubsection{Asymmetric Similarity Approximation Loss} To fully explore the entire similarities on $n$ available points, we introduce a simple but effective semantic-aware constraint, \textit{i.e.,} $\mathcal{\bm V} = \bm W^\top\bm Y$ where $\bm W \in \Re^{c\times l}$, to asymmetrically approximate the ground-truth affinity. Meanwhile, the label information are embedded into the latent semantic embedding $\mathcal{\bm V}$. Particularly, the matrix $\bm W^\top$ can be viewed as the category-level basis matrix of the latent semantic features $\mathcal{\bm V}$, because $\bm v_i = \bm W^\top\bm y_i$, where each item in $\bm v_i$ contains the category partition factor. Moreover, using the real-valued embeddings can produce more accurate approximation of similarity, and reduce the accumulated quantization error \cite{dong2008asymmetric,luo2018scalable}. Based on the asymmetric hashing learning \cite{shrivastava2014asymmetric}, we replace one of the binary codes $\bm B$ in (\ref{eq_1}), and consider its robust model
\begin{align}\label{eq_2}
&\min_{\bm B, \bm W} \| l\bm S - \mathcal{\bm V}^T \bm B\|_{21}\nonumber\\&~s.t.~\mathcal{\bm V} = \bm W^T\bm Y, \bm B \in \{-1,1\}^{l\times n},
\end{align}
which $\|\bm A\|_{21} = \sum_{i=1}^n\|\bm a^i\|_2$ denotes the $l_{21}$-norm of matrix $\bm A$ and $\bm a^i$ is the $i$-th row of matrix $\bm A$. The $l_{21}$-norm is robust to noise or outliers based on the rotation-invariance property. It is noteworthy that this simple constraint can enable the model to effectively exploit all the $n$ data points for training (shown in optimization) without any sampling. Moreover, it also can more precisely measure the quantization between the given similarities $\bm S$ and the learned asymmetric affinity.

\subsubsection{Semantic-Aware Autoencoder} The discriminative binary codes for training data can be learned based on model (\ref{eq_2}), but there still remain two concerns. On one hand, the latent semantic embedding only leverages the label information, while the inherent characteristics embedded in the training data are not well-explored to capture the instance-level features. On the other hand, (\ref{eq_2}) can not be generalized to the out-of-sample cases for efficient query generation. To this end, we formulate the linear semantic-aware autoencoder scheme, which optimizes against the following objective:
\begin{align}\label{eq_3}
&\min_{\{\bm P_i, \bm c_i\}_{i=1}^2, \mathcal{\bm V}, \bm W} \| \bm X - (\bm P_2(\bm P_1 \bm X + \bm c_1 \bm 1^T) + \bm c_2 \bm 1^T) \|_F^2 \nonumber\\&+ \gamma \mathcal R(\bm P_2, \bm P_1)     ~s.t.~\mathcal{\bm V} = \bm W^T\bm Y, \mathcal{\bm V} = \bm P_1 \bm X + \bm c_1 \bm 1^T,
\end{align}
where $\bm P_1 \in \Re^{l\times d}$ and $\bm P_2 \in \Re^{d\times l}$ are the encoding and decoding matrices, $\mathcal R(\cdot) = \|\cdot\|_F^2$ is the regularization term to avoid overfitting, $\bm c_1$ and $\bm c_2$ are the biased vectors, and $\gamma$ is a weighting parameter. It is clear that \textit{this model can make use of semantic attributes in $\mathcal{\bm V}$ as an intermediate level clue to associate low-level visual features with high-level semantic information}. However, when we project the visual $d$-dim features into the lower $l$-dim (typically $l\ll d$) semantic space, this may encounter the imbalanced projection problem, \textit{i.e.} the variances of the projected dimensions vary severely \cite{SSH}. To this end, inspired by ITQ \cite{ITQ}, we may change the coordinates of the whole feature space through an adjustment rotation. For eliminating the bias variables, we reformulate the above problem into a relaxed optimization with an orthogonal transformation:
\begin{align}\label{eq_4}
&\min_{\bm P_1,\bm P_2, \mathcal{\bm V}, \bm W} \| \bm X - \bm P_2 \mathcal{\bm V}\|_F^2 + \beta \|\mathcal{\bm V} - \bm P_1 \bm X\|_F^2 + \gamma \mathcal R(\bm P_2)
\nonumber\\& ~s.t.~\mathcal{\bm V} = \bm W^T\bm Y, \bm P_1^T \bm P_1 = \bm I.
\end{align}
From Eqn. (\ref{eq_4}), we can see that the preferred latent attributes satisfy $\mathcal{\bm V} = \bm P_1 \bm X$ with minimum reconstruction error $\bm X = \bm P_2 \mathcal{\bm V}$. Given the orthogonal transformation, the overall variance can be effectively diffused into all the learned dimensions through the adjustment rotation, and the underlying characteristics hidden in data $\bm X$ are uncovered and transferred into the semantic embeddings. Importantly, the encoder can be used as a linear hash function for new queries.
\subsubsection{Joint Objective Function} To preserve the interconnection between the semantic-aware similarity approximation and preferable latent semantic space construction, SADIH combines the asymmetric similarity approximation loss given in (\ref{eq_2}) and semantic-aware autoencoder scheme given in (\ref{eq_4}) into one unified learning framework. Such a learning framework can minimize the intractable full pairwise similarity-preserving error, meanwhile interactively enhances the qualities of the learned binary codes and discriminative latent semantic-aware embeddings:
\begin{align}\label{eq_6}
&\min_{\bm B, \bm W, \bm P_1,\bm P_2, \mathcal{\bm V}} \| l\bm S - \mathcal{\bm V}^T \bm B\|_{21} +\alpha \|\bm X - \bm P_2 \mathcal{\bm V}\|_F^2 \nonumber\\ &~~~~~~~~~~~~~~~~~~~~~~~~~~ + \beta \|\mathcal{\bm V} - \bm P_1 \bm X\|_F^2 + \gamma \mathcal R(\bm P_2)
\nonumber\\& ~s.t.~\bm B \in \{-1,1\}^{l\times n},\mathcal{\bm V} = \bm W^T\bm Y, \bm P_1^T \bm P_1 = \bm I,
\end{align}
where $\alpha$ is a weighting parameter. Furthermore, it is clear that the first term makes sure the asymmetric correlations between the discriminative binary codes $\bm B$ and latent semantic representations $\mathcal{\bm V}$. Therefore, the encoding matrix $\bm P_1$ can capture the discriminative information embedded in the latent semantic space, and the hashing codes for out-of-sample $\bm x_{t}$ can be directly generated by $\bm b = sgn(\bm P_1\bm x_{t})$, where $sgn(\cdot)$ denotes the element-wise sign function.

\section{Optimization}
The key of our optimization is to learn discriminative binary codes $\bm B$ and find a proper latent-embedding space $\mathcal{\bm V}$ with the data structural preservation $\bm P_1$ and data reconstruction $\bm P_2$. However, problem (\ref{eq_6}) is non-convex to all variables, and involves a discrete constraint, which leads to an NP-hard problem. Thus, we propose an alternating optimization algorithm to obtain a local minima.

We first rewrite problem (\ref{eq_6}) as an equivalent form:
\begin{align}\label{eq_7}
&\min_{\bm B, \bm W, \bm P_1,\bm P_2} \| l\bm S \!-\! \left(\bm W^T\bm Y\right)^T \bm B\|_{21}\! +\!\alpha \|\bm X \!-\! \bm P_2 \bm W^T\bm Y\|_F^2 \nonumber\\ &~~~~~~~~~~~~~~~~~~+ \beta \|\bm W^T\bm Y - \bm P_1 \bm X\|_F^2 + \gamma \mathcal R(\bm P_2, \bm W^T\bm Y)
\nonumber\\& ~s.t.~\bm B \in \{-1,1\}^{l\times n}, \bm P_1^T \bm P_1 = \bm I.
\end{align}
Next, we iteratively update each variable with an alternative manner, \textit{i.e.}, updating one when fixing others.

\textbf{$\bm B$-Step}: When fixing $\bm W$, $\bm P_1$ and $\bm P_2$, we can update $\bm B$ by using two discrete learning strategies. The first learning scheme optimizes $\bm B$ with a constant-approximation solution, inspired by \cite{COSDISH}. Specifically, we push the $l_{21}$-norm loss to a more strict $l_1$-norm loss, \textit{i.e.,}
\begin{align}\label{eq_8}
&\min_{\bm B} \| l\bm S \!-\! \left(\bm W^T\bm Y\right)^T \bm B\|_1~s.t.~\bm B \in \{-1,1\}^{l\times n}.
\end{align}
where $\|\cdot\|_1$ denotes the $l_{1}$-norm. Problem (\ref{eq_8}) reaches its minimum when $\bm B = sgn(\bm W^T\bm{Q})$, where $\bm Q = l\bm{YS}$ can be calculated beforehand. We denote this as \textbf{\textit{SADIH-L1}}.

Alternatively, we can directly optimize the $l_{21}$-norm loss with an equivalent transformation. We first define $\bm D \in \Re^{n\times n}$ as a diagonal matrix, the $i$-th diagonal element of which is defined as $\bm d_{ii}$ = $1/ 2\|\bm u^i\|$, where $\bm u^i$ is the $i$-th row of $\left(l\bm S-\bm Y^T\bm W\bm B\right)$. In this way, we need to solve
\begin{align}\label{eq_9}
&\min_{\bm B} Tr \left(\big(l\bm S - \bm Y^T\bm W \bm B\big)^T \bm D \big(l\bm S - \bm Y^T\bm W \bm B\big)\right) \nonumber\\
&~s.t.~\bm B \in \{-1,1\}^{l\times n},
\end{align}
which can be equivalently rewritten as
\begin{align}\label{eq_10}
&\min_{\bm B}Tr\left( \bm R^T\bm B \bm D \bm B^T \bm R \right)  -2 tr \left(\bm B^T \bm M\right)\nonumber\\
&~s.t.~\bm B \in \{-1,1\}^{l\times n},
\end{align}
where $\bm R = \bm W^T\bm Y$, $\bm M = \bm W^T\bm D\bm Q$, and Tr($\cdot$) is the trace norm. For this binary quadratic program problem, we employ the discrete cyclic coordinate descent (DCC) method \cite{SDH} to sequentially learn each row of $\bm B$ while fixing other rows. Let $\bm b^\top$ be the $k$-th row of $\bm B$, $k = 1, \cdots, l$, and $\bm\bar{\bm B}$ be the matrix of $\bm B$ excluding $\bm b$. Similarly, let $\bm r^T$ and $\bm q^T$ be the $k$-th row of $\bm R$ and $\bm M$, respectively. $\bm \bar{\bm R}$ and $\bm \bar{\bm M}$ are the matrix of $\bm R$ excluding $\bm r$ and the matrix of $\bm M$ excluding $\bm q$, respectively. Then, we have
\begin{align}\label{eq_11}
&Tr\left( \bm R^\top\bm B \bm D \bm B^\top \bm R \right)\nonumber\\
&= Tr\left(\left(\bm\bar{\bm B}^\top\bm\bar{\bm R}+\bm b\bm r^\top\right) \bm D \left(\bm\bar{\bm R}^\top\bm\bar{\bm B}+\bm r\bm b^\top\right)\right)\\
&= 2Tr\left(\bm b\bm r^\top \bm D\bm\bar{\bm R}^\top \bm\bar{\bm B}\right)+Tr\left(\bm b\bm r^\top \bm D \bm r\bm b^\top\right) + const.\nonumber
\end{align}
Since $Tr(\bm b\bm r^\top \bm D \bm r\bm b^\top) = Tr(\bm r^\top \bm D \bm r\bm b^\top\bm b) = n\sum_i \bm d_{ii} \bm r^\top\bm r$, this term is a constant \textit{w.r.t.} $\bm b$.

Similarly, $tr \left(\bm B^T \bm M\right) = \left[\begin{array}{cc} \bm b^\top \bm q & \bm b^\top\bm\bar{\bm M}\\ \bm\bar{\bm B}^\top\bm q& \bm\bar{\bm B}^\top\bm\bar{\bm M}\end{array}\right]$, and then
\begin{align}\label{eq_12}
tr \left(\bm B^T \bm M\right) = \bm q^\top\bm b+const.
\end{align}

Therefore, Eqn. (\ref{eq_10}) can be reformulated as
\begin{align}\label{eq_13}
&\min_{\bm b} Tr \left( \left(\bm r^\top \bm D\bm\bar{\bm R}^\top \bm\bar{\bm B} - \bm q^\top\right) \bm b\right)\nonumber\\&~s.t.~\bm b \in \{-1,1\}^{l},
\end{align}
which has a closed-form solution:
\begin{align}\label{eq_14}
\bm b = sgn \left(\bm q - \bm\bar{\bm B}^\top\bm\bar{\bm R}\bm D\bm r\right).
\end{align}
We can see that each bit $\bm b$ is calculated based on the pre-learned ($l$-$1$) bits $\bm\bar{\bm B}$. We iteratively update each bit until it converges to a set of optimal codes $\bm B$.


\textbf{$\bm W$-Step}: When fixing $\bm B$, $\bm P_1$ and $\bm P_2$, the objective function (\ref{eq_7}) \textit{w.r.t.} $\bm W$ is degenerated to
\begin{align}\label{eq_15}
&\min_{\bm W} Tr \left(\big(l\bm S - \bm Y^T\bm W \bm B\big)^T \bm D \big(l\bm S - \bm Y^T\bm W \bm B\big)\right) \!\\ &+\!\alpha \|\bm X \!-\! \bm P_2 \bm W^T\bm Y\|_F^2+ \beta \|\bm W^T\bm Y - \bm P_1 \bm X\|_F^2 + \gamma \| \bm W^T\bm Y\|_F^2.\nonumber
\end{align}
The closed-form solution of $\bm W$ is given by setting the derivation of (\ref{eq_15}) to zero, \textit{i.e.,}
\begin{align}\label{eq_16}
\bm W = \bm L(\bm{QD}&\bm{B}^\top +\bm{YX}^\top (\alpha \bm P_2+\beta \bm P_1))\nonumber\\&~~(\bm{BDB}^\top+\alpha \bm P_2\bm P_2^T + (\beta +\gamma)\bm I)^{-1},
\end{align}
where $\bm L = (\bm{YY}^\top)^{-1}$ can be calculated beforehand.

\begin{table*}[!t]\vspace{-.2cm}
\begin{center}
\caption{The averaged retrieval result comparison (MAP score and precision of top 100 samples) and computation time efficiency (in seconds) on \textbf{CIFAR-10} using $512$-dimensional GIST features, and both \textbf{SUN397} and \textbf{ImageNet} datasets using $4096$-dim deep CNN features from VGG19 $fc7$. The best performances have been displayed in boldface.} \label{Table_1}
\resizebox{1\textwidth}{!}{
\begin{tabular}{|c||ccccccc||cccccc||cccccc|}
\hline
\multirow{2}{*}{\textbf{Methods}}&& \multicolumn{18}{c|}{\textbf{CIFAR-10}}\\
\cline{2-20}
&&\multicolumn{6}{c||}{\textbf{MAP Score}}&
\multicolumn{6}{c||}{\textbf{Precision@top100}}&
\multicolumn{6}{c|}{\textbf{Computation Time (in second)}}\\
\hline
\textbf{\# bits} && 8-bits & 16-bits & 32-bits & 64-bits & 96-bits & 128-bits 
  & 8-bits & 16-bits & 32-bits & 64-bits & 96-bits & 128-bits 
  & 8-bits & 16-bits & 32-bits & 64-bits & 96-bits & 128-bits \\
\hline
ITQ&&0.1528 &0.1552 &0.1642 &0.1700 &0.1719 &0.1757 
	&0.2012 &0.2341 &0.2798 &0.3035 &0.3146 &0.3295
    & 0.98 & 1.40 & 2.49 & 4.35 & 7.01 & 10.07 \\
SGH&&0.1411 &0.1522 &0.1643 &0.1711 &0.1759 &0.1781 
	&0.1995 &0.2487 &0.2866 &0.3085 &0.3280 &0.3329 
    &4.47 & 6.27 &  8.30 &  14.62 &   18.71 &  25.16\\
CBE&&0.1134 &0.1168 &0.1335 &0.1454 &0.1552 &0.1564
	&0.1370 &0.1545 &0.1927 &0.2449 &0.2706 &0.2791 
    &29.80 & 29.92 & 29.90 & 29.30 & 29.53  & 29.53\\
DSH&&0.1441 &0.1437 &0.1548 &0.1589 &0.1622 &0.1651 
	&0.1649 &0.1986 &0.2358 &0.2498 &0.2680 &0.2738
    &0.35 & 0.39 &  0.49 & 0.81 &  1.03 & 1.36\\
MFH&&0.1428 &0.1483 &0.1543 &0.1605 &0.1649 &0.1638
	&0.1908 &0.2410 &0.2645 &0.2944 &0.3146 &0.3124 
    &25.33 & 26.44 & 35.30 & 37.94 & 49.19  & 55.42\\
OCH&&0.0744 &0.1395 &0.2299 &0.3048 &0.3459 &0.3720 
	&0.1530 &0.3276 &0.4895 &0.5994 &0.6489 &0.6765 
    & 9.29 & 9.25 & 8.97 & 13.94 & 14.53 & 9.76\\
ITQ-CCA&&0.2736 &0.3157 &0.3361 &0.3519 &0.3588 &0.3552 
	&0.3815 &0.4205 &0.4417 &0.4574 &0.4650 &0.4634
    & 1.17 & 2.14 & 3.36 & 6.88 & 11.25 & 16.62\\
KSH&&0.2477 &0.2786 &0.3044 &0.3202 &0.3330 &0.3349
	&0.2511 &0.2849 &0.3068 &0.3164 &0.3283 &0.3298
    & 38.25 & 77.04 & 171.54 & 354.14 & 501.57 & 729.75\\
LFH&&0.2845 &0.3849 &0.5013 &0.5891 &0.6144 &0.6109 
	&0.2163 &0.2889 &0.4126 &0.5039 &0.5353 &0.5279 
    & 4.13 & 7.22 & 12.07 & 18.25 & 29.53 & 40.03\\
COSDISH&&0.4990 &0.5573 &0.6263 &0.6155 &0.6399 &0.6551
	&0.4347 &0.4818 &0.5483 &0.5284 &0.5591 &0.5728 
    & 5.65 & 14.78 & 34.84 & 128.13 & 302.06 & 541.51\\
SDH&&0.2855 &0.3956 &0.4478 &0.4610 &0.4735 &0.4789 
	&0.3633 &0.4920 &0.5298 &0.5419 &0.5527 &0.5543 
    & 29.40 & 70.30 & 142.73 & 278.70 & 537.74 & 874.45\\
SDHR&&0.2547 &0.3782 &0.4422 &0.4583 &0.4727 &0.4782 
	&0.3221 &0.4783 &0.5306 &0.5478 &0.5503 &0.5561
     & 37.08 & 41.69 & 54.94 & 115.12 & 240.87 & 278.52\\
FSDH&&0.3211 &0.4196 &0.4285 &0.4629 &0.4703 &0.4708
	&0.4170 &0.5046 &0.5231 &0.5399 &0.5433 &0.5449 
    & 7.58 & 11.45 & 11.75 & 8.28 & 14.00 & 15.85\\
FastH&&0.2965 &0.3679 &0.4358 &0.4715 &0.4917 &0.5018
	&0.3582 &0.4414 &0.5107 &0.5528 &0.5751 &0.5875 
    & 95.51 & 219.51 & 302.46 & 596.94 & 1013.51 & 834.36\\
\hline
\textbf{SADIH}&&0.4647 &0.6992 &0.7263 &0.7271 &\textbf{0.7367} &0.7319
	&0.4204 &0.6404 &\textbf{0.6682} &0.6609 &\textbf{0.6703} &0.6666 
    & 1.37 & 4.16 & 19.88 & 65.30 & 158.48 & 315.40  \\
\textbf{SADIH-L1}&&\textbf{0.6131} &\textbf{0.6994} &\textbf{0.7277} &\textbf{0.7298} &0.7289 &\textbf{0.7348}
	&\textbf{0.5487} &\textbf{0.6472} &0.6647 &\textbf{0.6673} &0.6621 &\textbf{0.6697}  
    & 0.36 & 0.41 & 0.47 & 0.70 & 0.79 & 0.97 \\
\hline\hline
\multirow{2}{*}{\textbf{Methods}}&& \multicolumn{18}{c|}{\textbf{SUN397}}\\
\cline{2-20}
&&\multicolumn{6}{c||}{\textbf{MAP Score}}&
\multicolumn{6}{c||}{\textbf{Precision@top100}}&
\multicolumn{6}{c|}{\textbf{Computation Time (in second)}}\\
\hline
\textbf{\# bits} && 8-bits & 16-bits & 32-bits & 64-bits & 96-bits & 128-bits 
  & 8-bits & 16-bits & 32-bits & 64-bits & 96-bits & 128-bits 
  & 8-bits & 16-bits & 32-bits & 64-bits & 96-bits & 128-bits \\
\hline
\hline
ITQ&&0.0138 &0.0449 &0.1139 &0.2304 &0.3166 &0.3803
	&0.0177 &0.0579 &0.1199 &0.1955 &0.2377 &0.2658
    & 6.26 & 7.04 & 9.34 &13.65 & 18.32  & 23.94\\
SGH&&0.0149 &0.0516 &0.1229 &0.2379 &0.3164 &0.3830 
	&0.0197 &0.0726 &0.1408 &0.2159 &0.2619 &0.2903 
    & 15.23 & 17.46  & 21.83 & 30.41 & 38.61 &  46.81\\
DSH&&0.0069 &0.0216 &0.0480 &0.0878 &0.1231 &0.1612 
	&0.0090 &0.0358 &0.0667 &0.1081 &0.1368 &0.1615 
    & 1.43 & 1.62  & 2.16 & 3.14 & 4.33 &  5.51\\
CBE&&0.0035 &0.0094 &0.0266 &0.0814 &0.1480 &0.2054 
	&0.0058 &0.0183 &0.0468 &0.1063 &0.1597 &0.1971 
    & 116.35 &112.35  & 113.56 & 113.35 & 112.76 & 112.84\\
MFH&&0.0136 &0.0464 &0.1135 &0.2255 &0.3116 &0.3804
	&0.0199 &0.0664 &0.1317 &0.2093 &0.2556 &0.2874
    & 181.95 & 178.09 & 175.22 & 202.37  & 227.33 & 246.25\\
OCH&&0.0519 &0.1770 &0.2302 &0.3291 &0.3869 &0.3655 
	&0.0418 &0.1481 &0.1978 &0.2960 &0.3529 &0.3233 
    &58.72 & 50.17 & 55.71 & 57.07 & 57.58 & 60.73 \\
ITQ-CCA&&0.1426 &0.2130 &0.2725 &0.3958 &0.4201 &0.4177 
	&0.0659 &0.1833 &0.3092 &0.4926 &0.5498 &0.5567 
    & 14.00 & 14.96 & 21.44 & 37.70 & 28.45 & 63.39 \\
KSH&&\textbf{0.3400} &0.4072 &0.4361 &0.4276 &0.4497 &0.4578 
	&0.0605 &0.0717 &0.0759 &0.0745 &0.0776 &0.0785 
    & 58.75 & 203.49 & 400.48 & 770.64 &  1159.23 & 1747.01 \\
LFH&&0.0530 &0.1536 &0.2814 &0.3533 &0.3674 &0.4013 
	&0.0454 &0.1278 &0.2553 &0.3183 &0.3312 &0.3600 
    & 58.93 &  54.11 & 68.85 & 92.22 & 140.28 &  143.53  \\
COSDISH&&0.2135 &0.3341 &0.5143 &0.6832 &0.7160 &0.7365 
	&0.0796 &0.2141 &0.4081 &0.6096 &0.6527 &0.6755
    & 33.93 & 57.69 & 92.10 & 258.65 & 556.82 &1026.28\\
SDH&&0.1211 &0.2379 &0.3415 &0.4225 &0.4585 &0.4867
	&0.1024 &0.3531 &0.5070 &0.5990 &0.6395 &0.6672 
     & 188.58 & 191.78 & 450.55 & 1014.60 & 1839.20 & 1958.06\\
SDHR&&0.1109 &0.2249 &0.3140 &0.4152 &0.4688 &0.4842
	&0.0797 &0.3212 &0.4821 &0.6049 &0.6489 &0.6614 
    & 933.43 & 958.68 & 1150.00 & 1692.07 & 1736.54 & 1863.96\\
FSDH&&0.1064 &0.2112 &0.3365 &0.4245 &0.4604 &0.4775 
	&0.0615 &0.3224 &0.5020 &0.6046 &0.6451 &0.6557 
    & 91.21 & 92.37 & 91.34  & 97.81 & 105.39 & 116.13\\
FastH&&0.1061 &0.2027 &0.3070 &0.3865 &0.4195 &0.4386
	&0.0890 &0.2929 &0.4676 &0.5670 &0.5935 &0.6203 
    & 103.02 & 198.64 & 448.35 & 886.73 & 785.03 & 1667.14\\
\hline
\textbf{SADIH}&&0.1588 &\textbf{0.5599} &0.5909 &0.6798 &0.7332 &0.7655 
	&0.0972 &\textbf{0.5411} &0.5578 &0.6411 &0.6911 &0.7156
    &12.68 & 20.78 & 52.64 & 186.77 & 398.34 & 695.19  \\
\textbf{SADIH-L1}&&0.2938 &0.5491 &\textbf{0.6597} &\textbf{0.7576} &\textbf{0.7762} &\textbf{0.8077}
	&\textbf{0.1467} &0.4472 &\textbf{0.5956} &\textbf{0.7050} &\textbf{0.7250} &\textbf{0.7572}
    & 9.72 & 10.21 & 10.49 & 11.18 & 11.62 & 11.79  \\
\hline\hline
\multirow{2}{*}{\textbf{Methods}}&& \multicolumn{18}{c|}{\textbf{ImageNet}}\\
\cline{2-20}
&&\multicolumn{6}{c||}{\textbf{MAP Score}}&
\multicolumn{6}{c||}{\textbf{Precision@top100}}&
\multicolumn{6}{c|}{\textbf{Computation Time (in second)}}\\
\hline
\textbf{\# bits} && 8-bits & 16-bits & 32-bits & 64-bits & 96-bits & 128-bits 
  & 8-bits & 16-bits & 32-bits & 64-bits & 96-bits & 128-bits 
  & 8-bits & 16-bits & 32-bits & 64-bits & 96-bits & 128-bits \\
\hline
\hline
ITQ&&0.0442 &0.1379 &0.2626 &0.4194 &0.4991 &0.5531
	&0.0543 &0.1665 &0.2816 &0.3817 &0.4190 &0.4475
    & 38.73 & 40.45 &  42.87 & 50.23 & 59.28 &  68.59\\
SGH&&0.0386 &0.1246 &0.2544 &0.4158 &0.4952 &0.5569 
	&0.0459 &0.1509 &0.2493 &0.3381 &0.3728 &0.3957
    & 38.28 & 45.24 & 49.80 & 57.90 & 67.90 &  86.43\\
DSH&&0.0206 &0.0504 &0.1149 &0.1854 &0.2886 &0.3295 
	&0.0265 &0.0755 &0.1485 &0.2062 &0.2735 &0.2966 
    &4.65 &  5.83& 6.91 & 9.59 & 13.15 & 18.02\\
CBE&&0.0240 &0.0311 &0.0554 &0.0821 &0.1078 &0.1573 
	&0.0346 &0.0744 &0.1672 &0.2441 &0.3127 &0.4098 
    &287.12 & 254.74 &  251.37 & 243.71 & 245.01 & 270.34\\
MFH&&0.0340 &0.1118 &0.2368 &0.3937 &0.4908 &0.5570 
	&0.0446 &0.1347 &0.2376 &0.3274 &0.3698 &0.3958
    & 616.86 & 750.02 & 799.93 & 808.44 & 882.59 & 900.17 \\
OCH&&0.1302 &0.2720 &0.4187 &0.5248 &0.5705 &0.5969 
	&0.2263 &0.3728 &0.4711 &0.5665 &0.6976 &0.7141 
    & 33.27 & 35.07 & 36.89 & 36.12 & 36.96 & 37.01 \\
ITQ-CCA&&0.1616 &0.2735 &0.4090 &0.6017 &0.7278 &0.7441
	&0.0959 &0.2627 &0.4428 &0.6518 &0.7671 &0.7773
    & 66.18 & 65.77 & 65.63 & 75.71 & 81.99 & 110.89\\
KSH&&0.1549 &0.2902 &0.4115 &0.4870 &0.5141 &0.5282 
	&0.1478 &0.2257 &0.2757 &0.3079 &0.3201 &0.3251 
    & 136.77 & 237.76 & 453.38 & 834.08 & 1328.50 & 1565.16 \\
LFH&&0.2512 &0.4380 &0.5576 &0.6347 &0.4972 &0.6604
	&0.1638 &0.5186 &0.6519 &0.7138 &0.6044 &0.7337 
    & 100.45 & 105.73 & 128.35 & 184.69 &  219.43 & 168.22  \\
COSDISH&&0.2392 &0.5298 &0.7304 &0.7937 &0.8038 &0.8092
	&0.0996 &0.4663 &0.6943 &0.7640 &0.7670 &0.7740 
    & 39.85 & 61.05 & 106.87 & 425.75 & 848.97 & 1271.77  \\
SDH&&0.2684 &0.4631 &0.5975 &0.6488 &0.3597 &0.7040 
	&0.1662 &0.5377 &0.6752 &0.7235 &0.4397 &0.7550 
    & 182.21 & 187.85 & 210.91 & 614.88 & 936.21 & 1490.15\\
SDHR&&0.2703 &0.4637 &0.5843 &0.6749 &0.6653 &0.6827 
	&0.1664 &0.5417 &0.6694 &0.7373 &0.7404 &0.7494
    & 249.32 & 258.10 & 281.94 & 705.89 & 1138.48 & 1645.44\\
FSDH&&0.2818 &0.4650 &0.5987 &0.6768 &0.7063 &0.7187
	&0.1879 &0.5414 &0.6775 &0.7382 &0.7582 &0.7631
    & 136.65 & 149.12 & 137.81 & 188.14 & 192.19 & 196.87\\
FastH&&0.1613 &0.2983 &0.4145 &0.4803 &0.5119 &0.5306 
	&0.1502 &0.2251 &0.2776 &0.3045 &0.3176 &0.3257
    & 116.31 & 191.69 & 327.66 & 798.27 & 991.33 & 1369.75 \\
\hline
\textbf{SADIH}&&0.2587 &0.5790 &0.7303 &0.7964 &\textbf{0.8112} &0.8082 
	&0.0817 &0.5172 &0.7014 &0.7660 &\textbf{0.7770} &0.7710 
    & 7.70 & 17.21 & 53.38 & 188.56 & 410.69 & 732.02  \\
\textbf{SADIH-L1}&&\textbf{0.5169} &\textbf{0.7057} &\textbf{0.7767} &\textbf{0.8037} &0.8100 &\textbf{0.8165}
	&\textbf{0.4254} &\textbf{0.6554} &\textbf{0.7360} &\textbf{0.7664} &0.7745 &\textbf{0.7848}
    & 5.75 & 5.69 & 6.07 & 6.02 & 6.63 & 8.03  \\
\hline
\end{tabular}}
\end{center}\vspace{-.05cm}
\end{table*}

\textbf{$\bm F$-Step}: When fixing $\bm B$, $\bm W$ and $\bm P_2$, problem (\ref{eq_7}) \textit{w.r.t.} $\bm P_1$ becomes
\begin{align}\label{eq_17}
\min_{\bm P_1} \|\bm W^T\bm Y - \bm P_1 \bm X\|_F^2 ~s.t.~\bm P_1^T \bm P_1 = \bm I,
\end{align}
which can be solved by the following lemma.
\begin{lemma}
$\bm P_1 = \bm{UV}^T$ is the optimal solution to the problem in Eqn. (\ref{eq_17}), where $\bm U$ and $\bm V$ are the left and right singular matrices of the
compact Singular Value Decomposition (SVD) on ($\bm{XY}^\top \bm W$).
\end{lemma}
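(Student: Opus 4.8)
The plan is to recognize Eqn.~(\ref{eq_17}) as an instance of the orthogonal Procrustes problem and solve it by reducing to a linear trace maximization, which is then settled by von~Neumann's trace inequality. First I would expand the squared Frobenius objective as
\begin{align*}
\|\bm W^\top\bm Y - \bm P_1\bm X\|_F^2 = \|\bm W^\top\bm Y\|_F^2 - 2\,\mathrm{Tr}\!\big(\bm Y^\top\bm W\,\bm P_1\,\bm X\big) + \mathrm{Tr}\!\big(\bm X^\top \bm P_1^\top\bm P_1\,\bm X\big).
\end{align*}
Under the constraint $\bm P_1^\top\bm P_1 = \bm I$ the last term equals $\|\bm X\|_F^2$, and the first term does not involve $\bm P_1$, so both are constants. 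Using the cyclic invariance of the trace, minimizing~(\ref{eq_17}) is therefore equivalent to
\begin{align*}
\max_{\bm P_1^\top\bm P_1 = \bm I}\ \mathrm{Tr}\!\big(\bm P_1\,\bm X\bm Y^\top\bm W\big).
\end{align*}

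Next I would substitute the compact SVD $\bm X\bm Y^\top\bm W = \bm U\bm\Sigma\bm V^\top$, where $\bm\Sigma=\mathrm{diag}(\sigma_1,\dots,\sigma_r)$ collects the $r$ positive singular values and $\bm U,\bm V$ have orthonormal columns. Then $\mathrm{Tr}(\bm P_1\,\bm X\bm Y^\top\bm W)=\mathrm{Tr}(\bm\Sigma\,\bm V^\top\bm P_1\bm U)=\sum_{i=1}^r \sigma_i\,[\bm N]_{ii}$ with $\bm N:=\bm V^\top\bm P_1\bm U$. Since $\bm U$, $\bm V$, and $\bm P_1$ all have orthonormal columns, $\bm N$ is a contraction, so $|[\bm N]_{ii}|\le 1$ for every $i$; combined with $\sigma_i\ge 0$ this gives the upper bound $\mathrm{Tr}(\bm P_1\,\bm X\bm Y^\top\bm W)\le\sum_{i=1}^r\sigma_i$. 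To finish, I would verify that this bound is attained by the proposed solution: choosing $\bm P_1=\bm U\bm V^\top$ (with $\bm U,\bm V$ read off in the orientation consistent with the stated SVD) makes $\bm N=\bm I_r$, so equality holds, and one checks directly that this $\bm P_1$ meets the orthogonality constraint. Hence $\bm P_1=\bm U\bm V^\top$ is a global optimum of~(\ref{eq_17}).

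The routine part is the algebraic expansion and the cyclic-trace rewriting. The main obstacle, and the step I would be most careful about, is the bookkeeping for the \emph{compact} SVD when $\bm X\bm Y^\top\bm W$ is rank deficient (which is typical here, since $l\ll d$ and the number of classes $c$ bounds the rank of $\bm Y$): one must confirm that $\bm U\bm V^\top$ remains feasible for $\bm P_1^\top\bm P_1=\bm I$ in the intended sense, track which of $\bm U,\bm V$ plays the role of ``left'' versus ``right'' singular matrix so that $\bm V^\top\bm P_1\bm U$ genuinely collapses to the identity, and observe that the optimizer is unique only up to the action on the null spaces of the two factors. Everything else then follows from the standard contraction/von~Neumann argument above.
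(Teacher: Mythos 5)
The paper gives no in-text proof of this lemma (it is deferred to supplementary material), so there is nothing to compare line by line; your orthogonal-Procrustes reduction followed by von Neumann's trace inequality is certainly the intended argument, and it is the standard, correct proof of the statement \emph{as literally written}.

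There is, however, one step that deserves more scrutiny than the SVD bookkeeping you flag, namely the claim that the quadratic term is constant. Here $\bm P_1\in\Re^{l\times d}$ with $l\ll d$, so the literal constraint $\bm P_1^\top\bm P_1=\bm I_d$ is infeasible (a $d\times d$ identity cannot equal a matrix of rank at most $l$); the only reading that makes the feasible set nonempty is row-orthonormality, $\bm P_1\bm P_1^\top=\bm I_l$. Under that reading $\bm P_1^\top\bm P_1$ is a rank-$l$ orthogonal projector that varies over the feasible set, so $\mathrm{Tr}\bigl(\bm X^\top\bm P_1^\top\bm P_1\bm X\bigr)=\|\bm P_1\bm X\|_F^2$ is \emph{not} constant, the problem becomes an unbalanced Procrustes problem, and the clean reduction to $\max\,\mathrm{Tr}(\bm P_1\bm X\bm Y^\top\bm W)$ no longer holds exactly: $\bm P_1=\bm V\bm U^\top$ (note the transpose needed to get an $l\times d$ matrix; the lemma's $\bm U\bm V^\top$ has the wrong shape) maximizes only the cross term. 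Relatedly, with a \emph{compact} SVD and $r=\mathrm{rank}(\bm X\bm Y^\top\bm W)<l$ one gets $(\bm V\bm U^\top)(\bm V\bm U^\top)^\top=\bm V\bm V^\top\neq\bm I_l$, so the proposed optimizer is not even feasible until $\bm U,\bm V$ are completed to full orthonormal bases — you gesture at this but your "one checks directly that this $\bm P_1$ meets the orthogonality constraint" would fail as written in the rank-deficient case. In short: your argument is the right one and correctly proves the lemma under the stated (column-orthonormal, hence $l\ge d$) constraint, but to cover the regime the paper actually works in you must either state that the $\|\bm P_1\bm X\|_F^2$ term is being dropped as approximately constant, or restrict to the balanced case, and you must complete the compact singular factors before asserting feasibility.
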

\begin{proof}
Due to page limitation, we move the proof to supplementary material.
\end{proof}

\textbf{$\bm P$-Step}: Similarly, when fixing other variables, problem (\ref{eq_7}) \textit{w.r.t.} $\bm P_2$ can be re-written as:
\begin{align}\label{eq_18}
\min_{\bm P_2} \alpha \|\bm X \!-\! \bm P_2 \bm W^T\bm Y\|_F^2 + \gamma \|\bm P_2\|_F^2.
\end{align}
The minimal $\bm P_2$ can be obtained by setting the partial derivative of Eqn. (\ref{eq_18}) to zero, and we have
\begin{align}\label{eq_19}
\bm P_2 = \bm{(\alpha \bm {XX}^\top+\gamma I)^{-1}XY}^T\bm W,
\end{align}
where $T = (\alpha \bm {XX}^\top+\gamma I)^{-1}$ can be computed beforehand.

\begin{figure*}[!t]\vspace{-.1cm}
\centering\resizebox{.995\textwidth}{!}{
\includegraphics[]{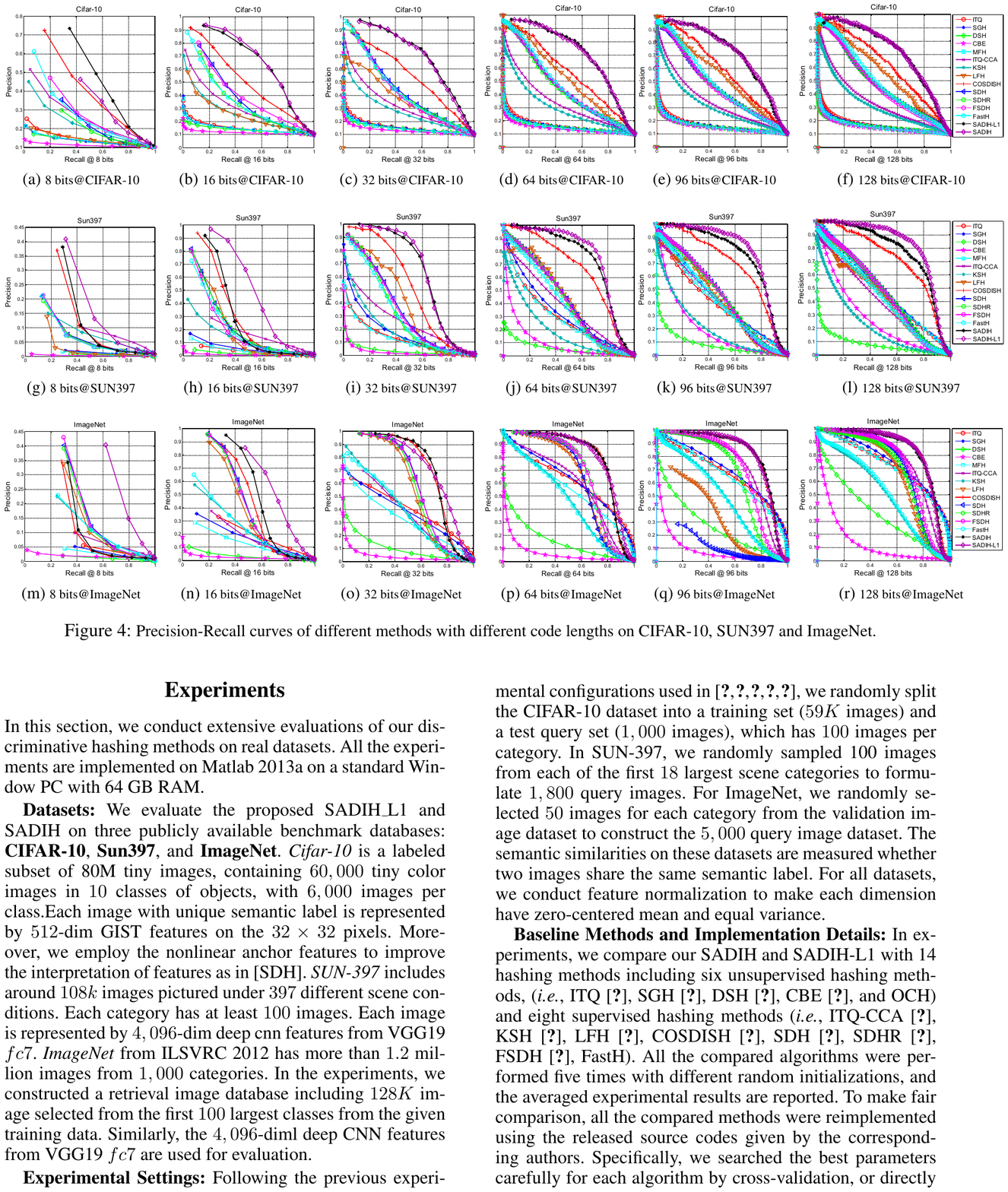}}
\caption{\small Precision-Recall curves of different methods with different code lengths on CIFAR-10, SUN397 and ImageNet.}\label{fig_2}
\end{figure*}

The proposed optimization iteratively updates four variables until satisfying the convergence criteria. The convergence of the proposed optimization algorithm is guaranteed by the following theorem.
\begin{theorem}
The alternating optimization steps of our method will monotonously decrease the value of the objective function until it converges to a local optima.
\end{theorem}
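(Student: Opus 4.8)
The plan is the standard ``exact block minimization plus bounded-below'' argument, with the one genuine wrinkle being the $l_{21}$ term. Write $\mathcal{L}(\bm B,\bm W,\bm P_1,\bm P_2)$ for the objective in (\ref{eq_7}) (equivalently (\ref{eq_6}) after eliminating $\mathcal{\bm V}=\bm W^\top\bm Y$). First I would note that $\mathcal{L}\ge 0$ on the whole feasible set, being a sum of an $l_{21}$-norm and three squared Frobenius norms; hence it is bounded below. It therefore suffices to show that one full sweep of the four updates never increases $\mathcal{L}$, since a monotone real sequence bounded below converges.

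For the three ``continuous'' blocks this is immediate. The $\bm P_1$-step solves the orthogonal Procrustes problem (\ref{eq_17}) exactly via Lemma~1, so $\mathcal{L}$ cannot increase; the $\bm P_2$-step minimizes the strongly convex quadratic (\ref{eq_18}) exactly by (\ref{eq_19}); and, for a \emph{fixed} reweighting matrix $\bm D$, the $\bm W$-step minimizes the convex quadratic (\ref{eq_15}) exactly by (\ref{eq_16}). The only subtlety is that (\ref{eq_15}) uses the quadratic surrogate $\sum_i d_{ii}\|\bm u^i\|_2^2$ in place of the true loss $\sum_i\|\bm u^i\|_2$, with $d_{ii}=1/(2\|\bm u^i_{\mathrm{old}}\|_2)$ computed at the current iterate. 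Here I would invoke the elementary reweighting inequality $\|\bm a\|_2-\tfrac{\|\bm a\|_2^2}{2\|\bm b\|_2}\le \|\bm b\|_2-\tfrac{\|\bm b\|_2^2}{2\|\bm b\|_2}$, which is just a rearrangement of $(\|\bm a\|_2-\|\bm b\|_2)^2\ge 0$: applying it row-wise with $\bm b=\bm u^i_{\mathrm{old}}$, $\bm a=\bm u^i_{\mathrm{new}}$ and summing shows that the decrease in the reweighted quadratic lower-bounds the decrease in $\sum_i\|\bm u^i\|_2$. Since every other term in the $\bm W$-step (the $\alpha$, $\beta$ and $\gamma$ squared norms) appears identically in the surrogate and in $\mathcal{L}$, exact minimization of (\ref{eq_15}) therefore does not increase $\mathcal{L}$. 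The same device handles the reweighting used in the $\bm B$-step.

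For the $\bm B$-step I would argue per variant. For \textbf{SADIH-L1}, problem (\ref{eq_8}) is separable and linear in each entry of $\bm B$ over $\{-1,1\}$, so $\bm B=\mathrm{sgn}(\bm W^\top\bm Q)$ is a global minimizer of that block. For \textbf{SADIH}, discrete cyclic coordinate descent is applied to the reweighted problem (\ref{eq_10}): each bit update (\ref{eq_14}) is the exact minimizer of the linear-in-$\bm b$ problem (\ref{eq_13}) (using that $\min_{\bm b\in\{-1,1\}^l}\langle\bm g,\bm b\rangle$ is attained at $\mathrm{sgn}(-\bm g)$ and that the self-quadratic term $Tr(\bm b\bm r^\top\bm D\bm r\bm b^\top)$ is constant in $\bm b$), so the inner DCC loop monotonically decreases (\ref{eq_9}) and, the code space being finite, reaches a fixed point; composing this with the reweighting inequality above shows the $\bm B$-step does not increase $\mathcal{L}$. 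Chaining the four steps gives $\mathcal{L}(\bm B^{t+1},\bm W^{t+1},\bm P_1^{t+1},\bm P_2^{t+1})\le \mathcal{L}(\bm B^t,\bm W^t,\bm P_1^t,\bm P_2^t)$, hence convergence of the objective values. For the ``local optima'' claim I would observe that at any limit point each block is optimal given the others, so the limit satisfies blockwise (KKT/partial) optimality, which is the usual sense in which such alternating schemes converge.

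The main obstacle is exactly the $l_{21}$ term: one must verify that the iteratively reweighted step decreases the \emph{true} objective and not merely its quadratic surrogate (handled by the inequality above), and must keep the weights well defined --- this requires assuming $\bm u^i\neq 0$ for all $i$, or regularizing $d_{ii}=1/\big(2\sqrt{\|\bm u^i\|_2^2+\varepsilon}\big)$ and minimizing the corresponding $\sum_i\sqrt{\|\bm u^i\|_2^2+\varepsilon}$, which is smooth and still bounded below. Everything else reduces to routine exact minimization of convex quadratics and the Procrustes lemma already proved.
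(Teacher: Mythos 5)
The paper states this theorem with no proof at all (only Lemma~1 is deferred to the supplementary material), so there is no official argument to compare against; your skeleton --- objective bounded below by $0$, each block update non-increasing, hence convergence of the value sequence --- is the standard way to fill that gap, and your use of the IRLS majorization inequality $\|\bm a\|_2 \le \tfrac{\|\bm a\|_2^2}{2\|\bm b\|_2} + \tfrac{\|\bm b\|_2}{2}$ is exactly what is needed to pass from the reweighted quadratics in (\ref{eq_9}) and (\ref{eq_15}) back to the true $l_{21}$ loss. The $\bm P_1$, $\bm P_2$ and $\bm W$ steps, the DCC inner loop for the SADIH variant, and the $\bm u^i \neq 0$ caveat are all handled correctly.

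The genuine gap is your treatment of the SADIH-L1 $\bm B$-step. Problem (\ref{eq_8}) is \emph{not} ``separable and linear in each entry of $\bm B$'': the summand $|l s_{ij} - \bm v_i^\top \bm b_j|$ is piecewise linear, and it collapses to the linear expression $l - s_{ij}\,\bm v_i^\top \bm b_j$ only under the boundedness condition $|\bm v_i^\top \bm b_j| \le l$, which is not guaranteed for the real-valued embedding $\bm v_i = \bm W^\top \bm y_i$. This is precisely why the paper describes $\bm B = \mathrm{sgn}(\bm W^\top \bm Q)$ as a ``constant-approximation solution'' in the sense of COSDISH rather than an exact minimizer. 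Moreover, even if (\ref{eq_8}) were solved exactly, you would have decreased the $l_1$ surrogate, not the $l_{21}$ objective of (\ref{eq_7}); the inequality $\|\cdot\|_{21} \le \|\cdot\|_1$ transfers no monotonicity in the required direction. So your argument establishes monotone decrease only for the SADIH variant; for SADIH-L1 the theorem as stated needs either the additional hypothesis $|\bm v_i^\top \bm b_j| \le l$ together with the objective rewritten in the $l_1$ norm, or an explicit concession that that variant's $\bm B$-step is heuristic and falls outside the convergence guarantee.
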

In experiments, we found that our algorithm usually can efficiently converge within $t$ = $5-8$ iterations. The main computational complexity of our algorithm comes from calculating $\bm B$ of each iteration in $\mathcal{O}(lcn)$ for SADIH and $\mathcal{O}(nc)$ for SADIH-L1. Additionally, during optimization, the maximum complexity of other steps is $\mathcal{O}(ndc)$ due to the property of matrix product and inversion, which is very efficient in practice. In general, the computational complexities of the proposed optimization algorithm on SADIH learning is linear to the number of samples $\mathcal{O}(n)$.

\section{Experiments}
In this section, we conduct extensive evaluations of our discriminative hashing methods on real datasets. All the experiments are implemented on Matlab 2013a on a standard Window PC with 64 GB RAM.

\textbf{Datasets:} We evaluate the proposed SADIH and SADIH-L1 on three publicly available benchmark databases: \textbf{CIFAR-10} \cite{Cifar10}, \textbf{Sun397} \cite{Sun397}, and \textbf{ImageNet} \cite{ImageNet}. \textit{Cifar-10} is a labeled subset of 80M tiny images, containing $60,000$ tiny color images in $10$ classes of objects with $6,000$ images per class. Each image with unique semantic label is represented by $512$-dim GIST features on the $32 \times 32$ pixels. Moreover, we employ the nonlinear anchor features to improve the interpretation of features as in \cite{SDH}. \textit{SUN-397} includes around $108k$ images pictured under $397$ different scene conditions. Each category has at least $100$ images. Each image is represented by $4,096$-dim deep cnn features from VGG19 $fc7$ \cite{VGG}. \textit{ImageNet} from ILSVRC 2012 has more than $1.2$ million images from $1,000$ categories. In the experiments, we constructed a retrieval image database including $128K$ images selected from the first $100$ largest classes from the given training data. Similarly, the $4,096$-diml deep CNN features from VGG19 $fc7$ were used for evaluation. 

\textbf{Experimental Settings:} Following the previous experimental configurations used in \cite{SDH,COSDISH}, we randomly split the CIFAR-10 dataset into a training set ($59K$ images) and a test query set ($1,000$ images), which has $100$ images per category. In SUN-397, we randomly sample $100$ images from each of the first $18$ largest scene categories to formulate $1,800$ query images. For ImageNet, we randomly select $50$ images for each category from the validation image dataset to construct the $5,000$ query image dataset. The semantic similarities on these datasets are measured whether two images share the same semantic label. For all datasets, we conduct feature normalization to make each dimension have zero-centered mean and equal variance.

\textbf{Baseline Methods and Implementation Details:} In experiments, we compare our SADIH and SADIH-L1 with 14 hashing methods including six unsupervised hashing methods, (\textit{i.e.}, ITQ, SGH, DSH \cite{jin2014density}, CBE, and OCH) and eight supervised hashing methods (\textit{i.e.}, ITQ-CCA \cite{ITQ}, KSH, LFH, COSDISH, SDH, SDHR, FSDH, FastH). All the compared algorithms were performed five times with different random initializations, and the averaged experimental results were reported. To make fair comparison, all the compared methods were reimplemented using the released source codes given by the corresponding authors. Specifically, we searched the best parameters carefully for each algorithm by five-folds cross-validation, or directly employed the default parameters suggested by the original papers. For graph based method such as KSH and OCH, using the full semantic information for training is impossible due to the heavy computation complexity, and $5,000$ samples were selected from the training data for model construction. For our SSAH, the parameter $\gamma$ was empirically set to $0.001$. For the parameters $\alpha$ and $\beta$, we should tune it by cross-validation from the candidate set $\{0.01, 0.1, 1.0,5, 10\}$. The maximum iteration number $t$ was set to $5$, which could assure the best performance. 

\begin{figure*}[!t]
\centering\resizebox{.99\textwidth}{!}{
\includegraphics[]{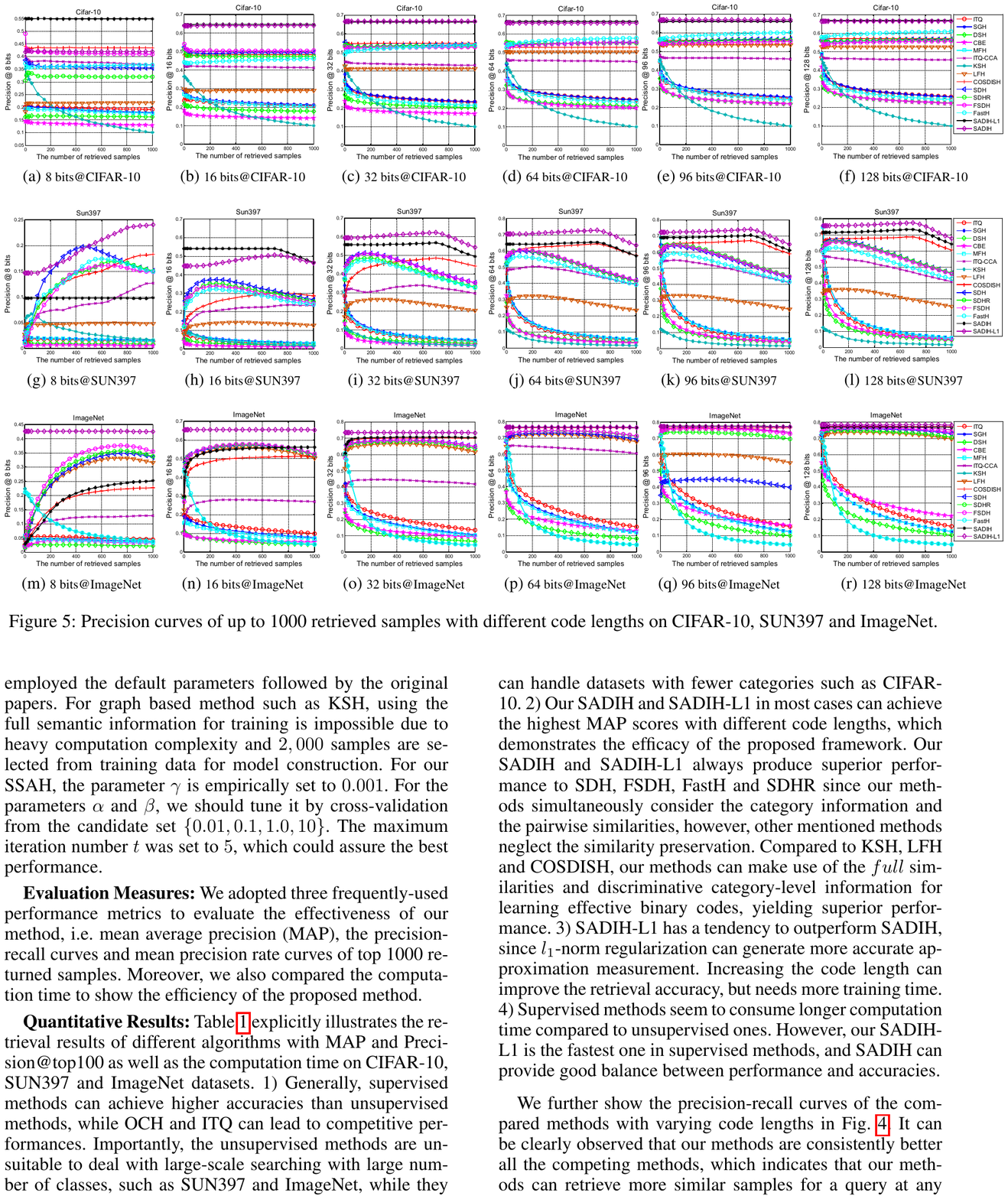}}
\caption{\small Precision curves of up to 1000 retrieved samples of different methods with different code lengths on CIFAR-10, SUN397 and ImageNet. (Better to view in color) }\label{fig_3}
\end{figure*}

\textbf{Evaluation Measures:} We adopted three frequently-used performance metrics \cite{manning2008} to evaluate different methods, \textit{i.e.} mean average precision (MAP), the precision-recall curves and mean precision rate curves of top 1000 returned samples. Moreover, we also compared the computation time to show efficiency.

\textbf{Quantitative Results:} Table \ref{Table_1} explicitly illustrates the retrieval results of different algorithms with MAP and Precision@top100 as well as the computation time on CIFAR-10, SUN397 and ImageNet datasets. 1) Generally, supervised methods can achieve higher accuracies than unsupervised methods, while OCH and ITQ can lead to competitive performances. Importantly, the unsupervised methods are unsuitable to deal with large-scale image searching with large number of classes, such as SUN397 and ImageNet, while they can handle datasets with fewer categories such as CIFAR-10. 2)  Our SADIH and SADIH-L1 in most cases can achieve the highest MAP scores with different code lengths, which demonstrate the efficacy of the proposed framework. Our SADIH and SADIH-L1 always produce superior performance to SDH, FSDH, FastH and SDHR, since our methods simultaneously consider the category information and pairwise similarities, however, other mentioned methods neglect the similarity preservation. Compared to KSH, LFH and COSDISH, our methods can make use of the $full$ similarities and discriminative category-level information for learning effective binary codes, yielding superior performance. 3) SADIH-L1 has a tendency to outperform SADIH, since $l_1$-norm regularization can generate more accurate approximation measurement. Increasing the coding lengths can improve the retrieval accuracies, but needs more training time. 4) Supervised methods seem to consume longer computation time compared to unsupervised ones. However, our SADIH-L1 is the fastest one in supervised methods, and SADIH can provide good balance between performance and time. 

We further show the precision-recall curves of the compared methods with varying code lengths in Fig. \ref{fig_2}. It can be observed that our methods are consistently better than all the competing methods, which indicates that our methods can retrieve more similar samples for a query at any fixed code length. Moreover, the precision variations \textit{w.r.t.} different number of retrieved samples are illustrated in Fig. \ref{fig_3}. We can observe that our methods are always superior to other methods, and their precisions are relatively stable with varying number of returned samples.

\section{Conclusion}\label{conc}
In this paper, we proposed a novel joint discriminative hashing framework, dubbed semantic-aware DIscrete Hashing (SADIH), which could efficiently guarantee the full semantic similarity preservation and discriminative semantic space construction. SADIH leveraged the asymmetric similarity approximation loss to preserve the full $n\times n$ similarities of the complete dataset. Meanwhile, the supervised semantic-aware autoencoder was designed to construct the discriminative semantic embedding space with full data variation preservation and good data reconstruction. The resulting problem was efficiently solved by the proposed discrete optimization algorithm. Extensive experimental results demonstrated the superiority of our methods on different large-scale datasets in terms of different evaluation protocols. 

\subsubsection{Acknowledgment} This work is partially supported by ARC FT130101530.

\bibliography{SADIHbib}
\bibliographystyle{aaai}
\smallskip

\end{document}